\newtheorem{proposition}{Proposition}
\DeclareMathOperator*{\argmin}{arg\,min}
\newcommand{\sign}{\operatorname{sign}}
\title{Finding separatrices of dynamical flows with Deep Koopman Eigenfunctions}
\author{%
  Kabir V. Dabholkar \\
  Faculty of Mathematics\\
  Technion -- Israel Institute of Technology\\
  Haifa, Israel  3200003 \\
  \texttt{kabir@campus.technion.ac.il} \\
  \And
  Omri Barak \\
  Rappaport Faculty of Medicine and Network Biology Research Laboratory\\
  Technion -- Israel Institute of Technology\\
  Haifa, Israel  3200003 \\
  \texttt{omri.barak@gmail.com} \\
}
\begin{document}

\maketitle

\begin{abstract}

Many natural systems, including neural circuits involved in decision making, are modeled as high-dimensional dynamical systems with multiple stable states. While existing analytical tools primarily describe behavior near stable equilibria, characterizing separatrices---the manifolds that delineate boundaries between different basins of attraction---remains challenging, particularly in high-dimensional settings. Here, we introduce a numerical framework leveraging Koopman Theory combined with Deep Neural Networks to effectively characterize separatrices. Specifically, we approximate Koopman Eigenfunctions (KEFs) associated with real positive eigenvalues, which vanish precisely at the separatrices. Utilizing these scalar KEFs, optimization methods efficiently locate separatrices even in complex systems. We demonstrate our approach on synthetic benchmarks, ecological network models, and high-dimensional recurrent neural networks trained on either neuroscience-inspired tasks or fit to real neural data. Moreover, we illustrate the practical utility of our method by designing optimal perturbations that can shift systems across separatrices, enabling predictions relevant to optogenetic stimulation experiments in neuroscience. Code available on \href{https://github.com/KabirDabholkar/separatrixLocator}{GitHub}.

\end{abstract}

\section{Introduction}

Recurrent neural networks (RNNs) are widely used both in neuroscience, as models of circuit dynamics, and in machine learning, as powerful tools for sequential data processing \cite{barak_recurrent_2017,vyas_computation_2020}. A key goal in neuroscience is to reverse-engineer these models in order to understand the underlying dynamical mechanisms \cite{barak_recurrent_2017, vyas_computation_2020}. In particular, many cognitive tasks such as decision-making \cite{machens_flexible_2005} and associative memory \cite{hopfield_neural_1982} can be modeled as multistable dynamical systems, where distinct decisions or memories correspond to different stable attractor states in phase space. Transitions between these attractors are governed by the geometry of the basins of attraction and, crucially, by the \emph{separatrix}: the manifold that delineates the boundary between basins (Figure~\ref{fig:cartoon}A).

A reverse-engineering method that has yielded significant insights about RNN computations involves finding approximate fixed points and linearising around them \cite{sussillo_opening_2013}. This involves minimizing a scalar function---the kinetic energy $q(x) = \|f(x)\|^2$---to locate these points (Figure~\ref{fig:cartoon}B). Once found, the linearisation of the dynamics at the fixed point can shed light on the mechanism of computations \cite{carnevale_dynamic_2015,maheswaranathan_reverse_2019,maheswaranathan_universality_2019,finkelstein_attractor_2021,mante_context-dependent_2013,liu_encoding_2024,driscoll_flexible_2024,low_remapping_2023,chaisangmongkon_computing_2017,pagan_individual_2025,wang_flexible_2018}.

However, fixed points alone do not capture the global organization of multistable dynamics. Since inputs typically perturb the state in arbitrary directions, it is critical to know whether such perturbations cross the separatrix. To predict the effects of perturbations or design targeted interventions, one must characterize the separatrix itself.

Ideally, we would have a scalar function analogous to the kinetic energy---smooth, yet vanishing precisely on the separatrix (Figure~\ref{fig:cartoon}C). This would allow gradient-based optimization to locate the decision boundary, help visualize the geometry of this manifold, and enable the design of optimal decision-changing perturbations (Figure~\ref{fig:cartoon}A).

In this work, we propose a novel method to characterize separatrices in high-dimensional black-box dynamical systems by leveraging Koopman operator theory \cite{koopman_hamiltonian_1931,budisic_applied_2012}. Specifically, we approximate scalar-valued Koopman eigenfunctions (KEFs) with positive real eigenvalues using deep neural networks. These eigenfunctions vanish precisely on the separatrix. Because they are scalar functions, gradient-based optimization can be used to efficiently trace out the separatrix, even in high dimensions.

We apply this framework to synthetic systems, ecological models, RNNs trained on neuroscience-inspired tasks, and trained to reproduce neural recordings. In addition, we demonstrate that the learned KEFs can be used to design minimal perturbations that push the system across separatrices---a setting relevant to experimental protocols such as optogenetic stimulation.

\begin{figure}[h]
    \centering
    \includegraphics[width=\linewidth]{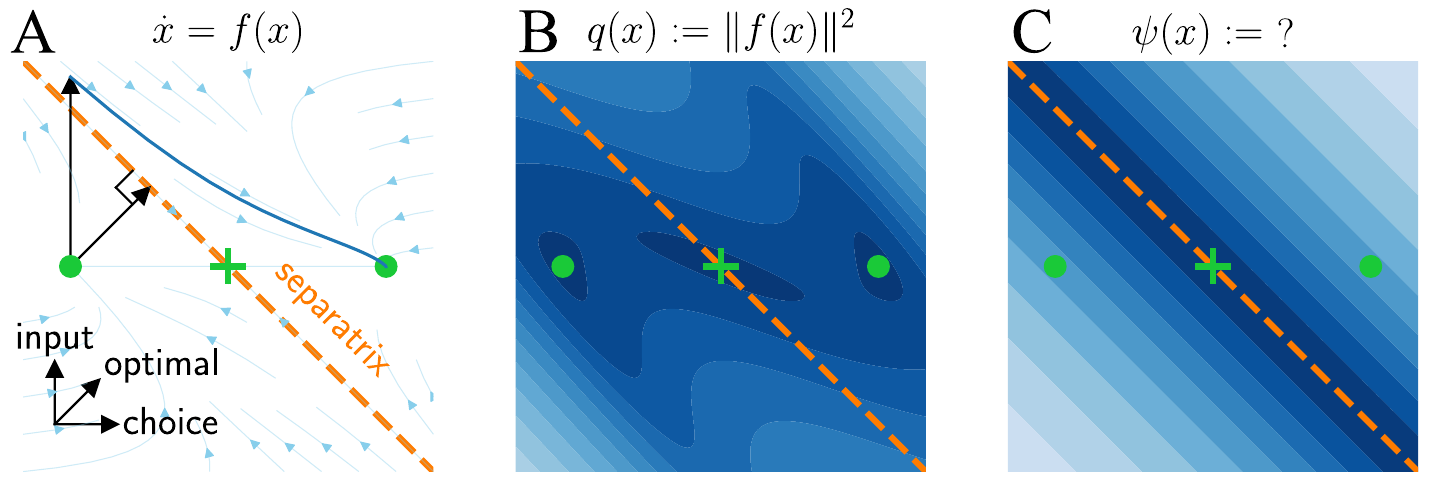}
    \caption{(A) Phase-portrait of a 2D bistable system. The two attractors can signify different choices, and therefore the direction between them is called the choice direction. External input pushes the system across the separatrix letting it relax to the other attractor. The optimal perturbation has a different direction. (B) The kinetic energy vanishes at the fixed points (green `o' stable and `+' unstable) but does not reveal the full separatrix. (C) We aim to learn a scalar function $\psi(x)$ that vanishes precisely on the separatrix.}
    \label{fig:cartoon}
\end{figure}

We summarise our main contributions:
\begin{itemize}
    \item We develop a tool to locate separatrices, the surfaces between basins of attraction in black-box multi-stable dynamical systems: a gap in the RNN reverse-engineering toolkit.

    \item We demonstrate that KEFs with positive eigenvalues vanish precisely on the separatrix and can be trained using deep neural networks and a loss based on the Koopman PDE error.
    
    \item We identify problems that may arise with the method. Mainly, two modes of degeneracy in the space of solutions to the PDE and propose effective regularization and training strategies to resolve them.
    \item We show how the learned KEFs can be used to design minimal norm perturbations that shift the system across separatrices.
    \item We empirically demonstrate the method on systems of increasing complexity: from low-dimensional synthetic models to 668-dimensional RNNs fit to mouse neural data.
\end{itemize}

\section{Related Work}

Our work builds on a growing body of literature at the intersection of Koopman operator theory, deep learning, and the analysis of dynamical systems, particularly in neuroscience and machine learning.

Koopman theory has recently been used to evaluate similarity between dynamical systems, both in neuroscience \cite{ostrow_beyond_2023}---where it is applied to study the temporal structure of computation---and in machine learning, where it has been used to compare training dynamics across models \cite{redman_identifying_2024}. These approaches typically analyze system-level behavior using dynamic mode decomposition \cite{schmid_dynamic_2010,tu_dynamic_2014,brunton_koopman_2016}, a finite-dimensional approximation of the Koopman operator.

In parallel, deep learning methods have emerged as powerful tools for solving partial differential equations (PDEs). Notably, the Deep Ritz Method \cite{e_deep_2018} and Deep Galerkin Method (DGM) \cite{sirignano_dgm_2018} demonstrate how deep neural networks (DNNs) can approximate solutions to variational and differential problems. A related line of work uses physics-informed neural networks (PINNs), which incorporate known PDE constraints as part of the loss function during training \cite{raissi_physics-informed_2019}.

Koopman-based embeddings have also been proposed as a tool for analyzing the internal dynamics of RNNs. In \cite{naiman_operator_2023}, the authors show that eigenvectors of finite-dimensional approximations of the Koopman operator can uncover task-relevant latent structure in RNNs. More generally, several works explore DNN-based approximations of Koopman operators for learning meaningful embeddings of nonlinear dynamics \cite{lusch_deep_2018,yeung_learning_2019,deka_koopman-based_2022}.

An alternative line of research for identifying Lagrangian Coherent Structures (LCS) employs the Finite-Time Lyapunov Exponent (FTLE) \cite{haller_lagrangian_2000,tanaka_separatrices_2009,haller_lagrangian_2015}, which quantifies sensitivity to initial conditions by measuring the exponential rate of separation between nearby trajectories over a finite time horizon. Ridges in the FTLE field reveal stable and unstable LCS, with the latter corresponding to separatrices in our terminology. These methods are most often applied to two- or three-dimensional fluid flows, where they delineate dynamically distinct regions in the velocity field.

Finally, our approach is conceptually connected to work on the geometry of Koopman eigenfunctions themselves. In particular, \cite{mauroy_isostables_2013} studies the level sets of KEFs and their relationship to isostables and isochrons in systems with stable fixed points. In the setting of linear systems, and nonlinear systems topologically conjugate to them, \cite{mezic_applications_2015,mezic_koopman_2021} establish links between KEF level sets and separatrices (stable manifolds). Together, these studies motivate our approach of deep-learning KEFs, as a method for identifying separatrices in general high-dimensional, multi-stable systems.

\section{Results}\label{sec:results}

\subsection{KEFs as Scalar Separatrix Indicators}

\begin{wrapfigure}[17]{r}{0.5\textwidth}
\vspace{-20pt}
    \centering
\includegraphics{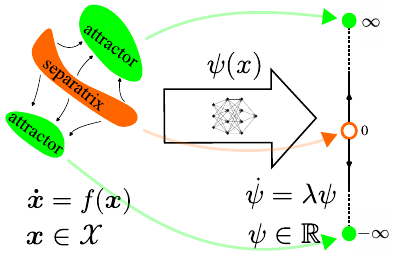}
    \caption{Mapping the high dimensional dynamics of $\boldsymbol x$ with an unstable manifold -- the separatrix -- to the one dimensional linear dynamics of $\psi$ with instability at $\psi=0$, i.e., $\lambda>0$. We approximate the mapping $\psi(\boldsymbol x)$ with a DNN.}
    \label{fig:mapping illustration}
\end{wrapfigure}

We consider autonomous dynamical systems of the form:
\begin{align}\label{eq:ODE}
    \dot{\boldsymbol{x}} = f(\boldsymbol{x}), \quad \boldsymbol{x} \in \mathcal X 
\end{align}
where the $\dot \square$ is shorthand for the time derivative $\frac{d}{dt}\square$ and $f: \mathcal{X} \to \mathcal{X}$ defines the dynamics on an $N$ dimensional state space $\mathcal X$.

Our goal is to construct a smooth scalar function $\psi$ that vanishes precisely on the separatrix between basins of attraction. Consider two such basins (Figure~\ref{fig:mapping illustration}). All we care about is the existence of the separatrix manifold, with dynamics moving away from it. The simplest such dynamics is a one-dimensional linear system $\dot{\psi}=\lambda \psi$, with $\lambda>0$. This motivates a mapping that projects $\boldsymbol{x}  \in \mathcal X $ to $\psi(\boldsymbol{x}) \in \mathbb{R}$, and induces these dynamics. Such a mapping needs to satisfy:
\begin{align}
    \frac{d}{dt}\bigg(\psi\big(\boldsymbol{x}(t)\big)\bigg) = \lambda\psi\big(\boldsymbol{x}(t)\big)\label{eq:koopman}
\end{align}
along any trajectory $\boldsymbol{x}(t)$ in $\mathcal X$. See Appendix~\ref{sec:proof} for a formal link between $\psi$ and the separatrix.

This is precisely the behavior of a Koopman eigenfunction (KEF) with eigenvalue $\lambda > 0$. Note that Koopman eigenfunctions are usually introduced in a different manner, and Appendix~\ref{sec:koopmanoperator} shows the connection to our description.

Equation \eqref{eq:koopman} can be re-written as:
\begin{align}
    \nabla \psi(\boldsymbol{x}) \cdot f(\boldsymbol{x}) = \lambda \psi(\boldsymbol{x}). \label{eq:koopmanPDE}
\end{align}
by employing the chain-rule of differentiation and requiring it to hold at all $\boldsymbol{x}\in \mathcal X$.
This is the Koopman Partial differential equation (PDE). $\lambda$ relates to the timescale of $\psi$ and is an important hyperparameter of our method (Appendix~\ref{sec:choosinglambdanumerical}).

We approximate $\psi$ using a deep neural network (Appendix~\ref{sec:archiectures}) and train it by minimizing the Koopman PDE residual. Specifically, we define the loss:
\begin{align}
    \mathcal{L}_{\text{PDE}} = \mathbb{E}_{\boldsymbol{x} \sim p(\boldsymbol{x})} \left[ \nabla \psi(\boldsymbol{x}) \cdot f(\boldsymbol{x}) - \lambda \psi(\boldsymbol{x}) \right]^2,
\end{align}
where $p(\boldsymbol{x})$ is a sampling distribution over the phase space \cite{e_deep_2018,sirignano_dgm_2018}. As with any eigenvalue problem, this loss admits the trivial solution $\psi \equiv 0$. To discourage such solutions, we introduce a shuffle-normalization loss where the two terms are sampled independently from the same distribution:
\begin{align}
    \mathcal{L}_{\text{shuffle}} = \mathbb{E}_{\boldsymbol{x} \sim p(\boldsymbol{x}), \tilde{\boldsymbol{x}} \sim p(\boldsymbol{x})} \left[ \nabla \psi(\boldsymbol{x}) \cdot f(\boldsymbol{x}) - \lambda \psi(\tilde{\boldsymbol{x}}) \right]^2,
\end{align}
and optimize the ratio:
\begin{align}\mathcal{L}_{\text{ratio}} = \frac{\mathcal{L}_{\text{PDE}}}{\mathcal{L}_{\text{shuffle}}}. \label{eq:ratio loss}
\end{align}
\begin{figure}[ht!]
    \centering
    \includegraphics[width=\linewidth]{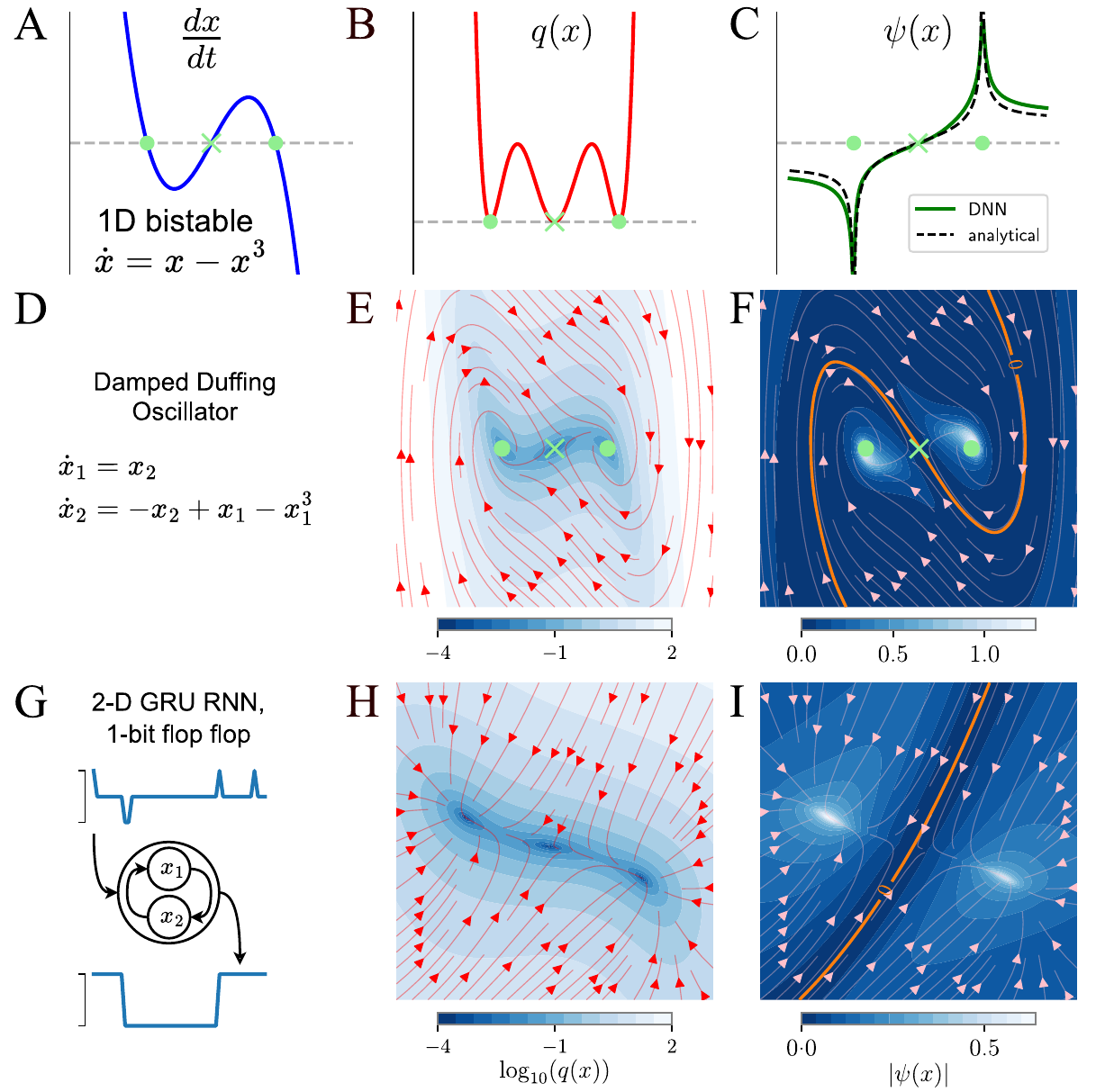}
    \caption{Our method to approximate KEFs in three bistable systems. (A) A 1D system $\dot x=x-x^3$. The curve shows $f(x)$, and its fixed points in light green -- `o's stable and `x's unstable. (B) The kinetic energy $q(x)$ of this system. (C) the true KEF \eqref{eq:1DbistableKEF} and its DNN approximation obtained by our method. (D,E,F) Damped Duffing Oscillator in 2D (G,H,I) 2-unit GRU \cite{chung_empirical_2014} RNN trained on 1-bit flip flop (1BFF) \cite{sussillo_opening_2013} and our KEFs.}
    \label{fig:1D_and_2Dexamples}
\end{figure}

We train using stochastic gradient descent, where expectations are approximated by a batch of samples drawn from $p(\boldsymbol{x})$ and the shuffle corresponds to a random permutation of the samples in the batch (see Appendix~\ref{sec:optimisation} for details).

To illustrate the method, we start with an analytically solvable system in 1D (Figure~\ref{fig:1D_and_2Dexamples}A):
\begin{align}
\dot{x} = x - x^3
\end{align}
The system has three fixed points, corresponding to minima of $q(x)$ (Figure~\ref{fig:1D_and_2Dexamples}B). A $\lambda=1$ KEF can be derived analytically (Appendix \ref{sec:1D analytical}):
\begin{align}\label{eq:1DbistableKEF}
    \psi(x) = \frac{x}{\sqrt{\vert 1-x^2\vert}}
\end{align}
And the zero of this function corresponds to the unstable point, which serves as a separatrix in this 1D case. Figure~\ref{fig:1D_and_2Dexamples}C shows that the DNN approximates this function well, with the location of the zero (separatrix) being captured precisely.

We also apply the method to two 2D bistable systems: a 2D damped Duffing oscillator (Figure~\ref{fig:1D_and_2Dexamples}DEF), and a 2-unit GRU RNN trained on a one-bit flip-flop task (Figure~\ref{fig:1D_and_2Dexamples}GHI). In both cases, the system has two stable fixed points (green circles) and one unstable saddle (green crosses). Kinetic energy functions, shown for comparison, are minimized at the fixed points. In contrast, the learned $\lambda=1$ KEFs are zero on the separatrix (green contours).

\subsection{Challenges and Solutions}\label{subsec:challenges_and_solutions}

While the examples above show cases where simple optimization leads to the separatrix, there are several crucial implementation details of our proposed methods. In particular, even a $\psi(x)$ that satisfies the Koopman PDE may fail to identify the true separatrix. This arises from known degeneracies in Koopman eigenfunctions, particularly in multistable or high-dimensional systems. To enable utilization of our tool, we describe two key failure modes and our strategies to resolve them, as summarized in Figure~\ref{fig:challenges}.

\paragraph{Degeneracy across basins.}

A central issue stems from the compositional properties of Koopman eigenfunctions. Let $\psi_1(x)$ and $\psi_2(x)$ be eigenfunctions with eigenvalues $\lambda_1$ and $\lambda_2$. Then, their product is also a KEF:
\begin{align}\label{eq:KEFproductrule}
\nabla[\psi_1(x)\psi_2(x)] \cdot f(x) = (\lambda_1 + \lambda_2) \psi_1(x)\psi_2(x).
\end{align}
In particular, consider a smooth KEF $\psi^1$ with $\lambda = 1$ that vanishes only on the separatrix (e.g., as in Figure~\ref{fig:1D_and_2Dexamples}). Now, consider a piecewise-constant function $\psi^0$ with $\lambda = 0$ that takes constant values within each basin and may be discontinuous at the separatrix. The product $\psi^1 \psi^0$ remains a valid KEF with $\lambda = 1$, but it can now be zero across entire basins—thereby destroying the separatrix structure we aim to capture (Figure~\ref{fig:challenges} top).

We observe this behavior empirically in Appendix~\ref{sec:DNNdegeneracy}, where independently initialized networks converge to different spurious solutions. To mitigate this, we introduce a \emph{balance regularization} term that biases $\psi$ to have nonzero values in opposing basins, encouraging sign changes across the separatrix. Specifically, we define:
\begin{align}\label{eq:balanceloss}
    \mathcal{L}_{\text{bal}} = \frac{(\mathbb{E}[\psi(x)])^2}{\text{Var}[\psi(x)]},
\end{align}
and train using the combined loss $\mathcal{L}_{\text{ratio}} + \gamma_{\text{bal}} \mathcal{L}_{\text{bal}}$, where $\gamma_{\text{bal}}$ is a scalar hyperparameter.

In higher-dimensional systems, the Koopman PDE admits a family of valid KEFs that differ in their directional dependence. Consider a separable 2D system:
\begin{align}
\dot{x} = f_1(x), \quad \dot{y} = f_2(y).
\end{align}

Solving the PDE for this system (appendix~\ref{sec:analytical2Ddegeneracy}) yields a family of KEFs parameterised by $\mu \in \mathbb{R}$:
\begin{align}\label{eq:2Ddegeneracy}
\psi(x, y) = A(x)^{\mu} B(y)^{1 - \mu},
\end{align}
where $A(x)$ and $B(y)$ are KEFs to the respective 1D problems. For example, when $\mu=1$, the eigenfunction depends only on $x$ and ignores $y$ -- therefore unable to capture $y$-dependent separatrices. Figure~\ref{fig:challenges} (bottom) illustrates this effect: different values of $\mu$ yield KEFs aligned with different separatrices.

Even in non-separable systems, this degeneracy can arise. Optimizing $\mathcal{L}_{\text{total}}$ can lead to a KEF that identifies some separatrices and ignores others (Appendix~\ref{sec:DNNdegeneracy}). To address this, we train multiple KEFs $\{\psi_i(x)\}_{i=1}^k$, while using their input distributions to bias each one to capture a different separatrix. For $\psi_i(x)$, we choose two points in different basins of attraction, and then use a binary search on the line connecting them to find a point on the separatrix. Note that the KEF is still needed to obtain the full separatrix, and not just a point. Around each such point $\beta_{i}$, we define a local distribution $\mathcal{N}(\beta_{i}, \sigma_{ij}^2 I)$, using a range of scales $\{\sigma_{ij}\}_{j=1}^J$ to span both fine and global structure. For each distribution, we minimize the sum $ \sum_{j=1}^J \mathcal{L}^j_{\text{total}}$.  We then consider the union of the separatrices obtained from each of the KEFs to complete the picture (see 2-bit flip flop demonstration below).



\begin{figure}[t]
    \centering
    \includegraphics[width=0.9\linewidth]{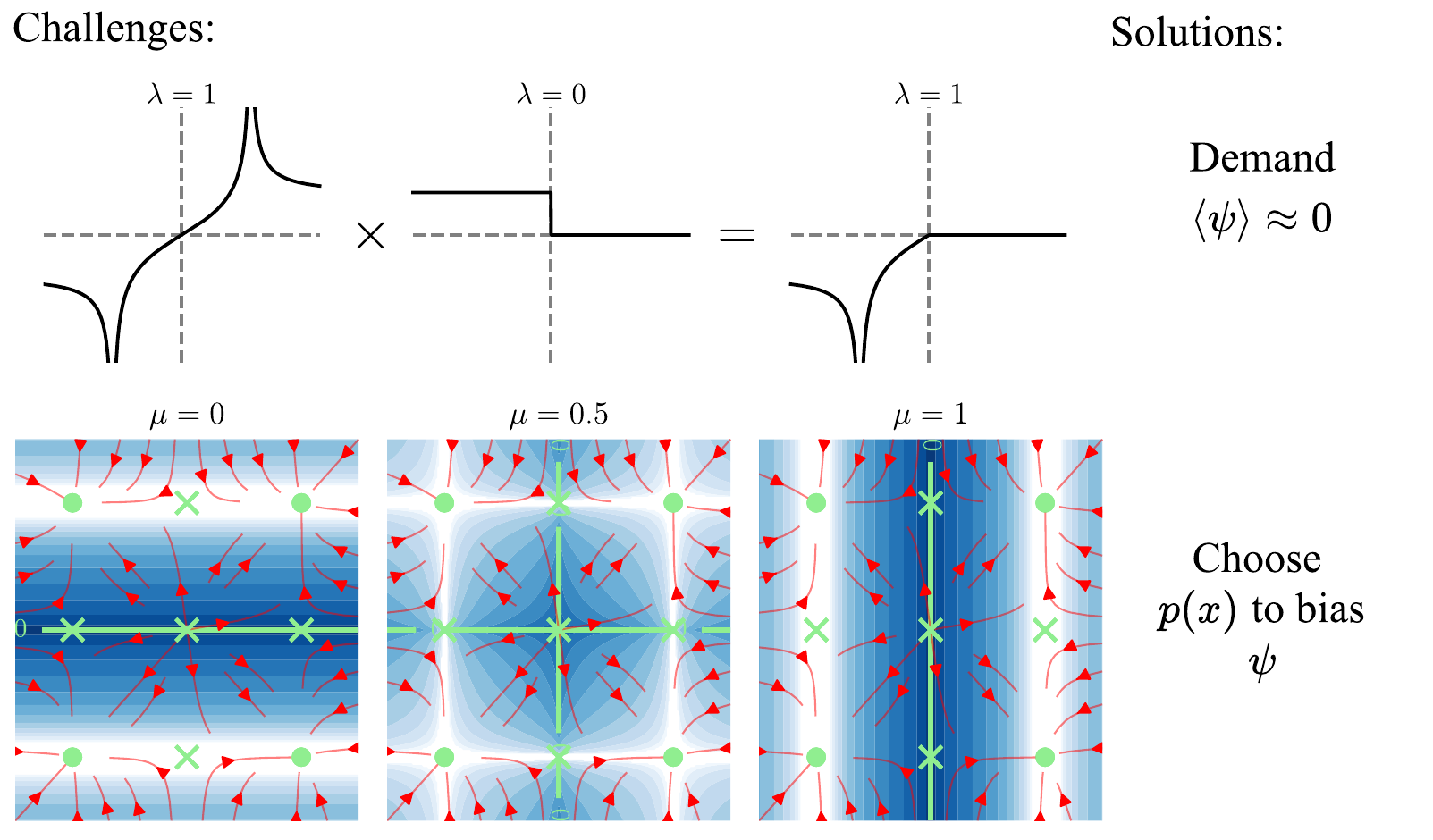}
    \caption{Top: In the presence of multiple basins, a KEF can collapse to zero within a single basin. This degeneracy is realised by multiplying the KEF with a piecewise constant KEF with $\lambda=0$ and invoking \eqref{eq:KEFproductrule}. This example corresponds to $\dot x=x-x^3$. We introduce a regularisation term \eqref{eq:balanceloss} to encourage the mean value $\langle \psi \rangle\approx0$. This encourages solutions with sign changes across basins. Bottom: In higher dimensions, degeneracy arises from directional ambiguity in solutions. We visualise the analytical solution \eqref{eq:2Ddegeneracy} for $\dot x = x-x^3;\dot y = y-y^3$. We address this by sampling from multiple local distributions around separatrix points and training an ensemble of KEFs.}
    \label{fig:challenges}
\end{figure}

\subsection{Demonstrations}

We demonstrate the applicability of the method on several qualitative examples. 

\textbf{3D GRU RNN Performing Two-Bit Flip Flop}

We first demonstrate our method on a low-dimensional recurrent neural network trained to perform a two-bit flip flop (2BFF) task. Specifically, we use a 3-unit gated recurrent unit (GRU) network \cite{chung_empirical_2014}. The trained network exhibits four stable fixed points (Figure~\ref{fig:2BFF}), corresponding to different memory states of the task.

To overcome the degeneracies described in Figure~\ref{fig:challenges}, we adopt a targeted sampling strategy. We first identify points on the separatrix by interpolating between pairs of fixed points and performing binary search: at each step, we simulate the dynamics to determine basin membership and refine the search. Around these discovered separatrix points, we construct concentric isotropic Gaussian distributions, and sample from them to train on the loss $\mathcal L_\text{total}$ (Appendix~\ref{sec:optimisation}).

Two resulting KEF are shown in Figure~\ref{fig:2BFF} A,B). As expected, the KEFs vanish precisely along the separatrices. This result validates the ability of our method to recover boundary manifolds in neural dynamical systems, even in the presence of degeneracy. Once we know the separatrices, we can determine optimal perturbation directions (Figure~\ref{fig:2BFF}C). Starting from a given initial condition (red star), we see that the same amplitude perturbation is sufficient to reach a different attractor when using the separatrix information, and insufficient when directed at the desired attractor. A more quantitative depiction of this effect is shown below in a higher-dimensional system.

\begin{figure}
    \centering
    \includegraphics[width=1.0\linewidth]{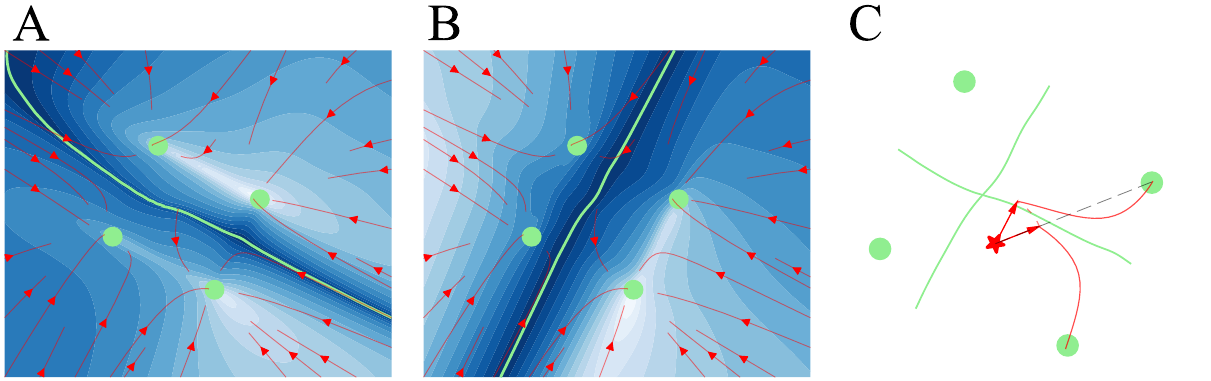}
    \caption{Two-bit flip flop task in a 3-unit GRU. The system has 4 stable fixed points (light-green points). (A,B) Two KEFs obtained by our method. They complement each other as they each captures a separatrix along one direction. (C) Use of KEF to design minimal perturbations that push trajectories across the separatrix.}
    \label{fig:2BFF}
\end{figure}

\begin{wrapfigure}[17]{r}{0.5\textwidth}
\vspace{-20pt}
\centering
\includegraphics[width=0.8\linewidth]{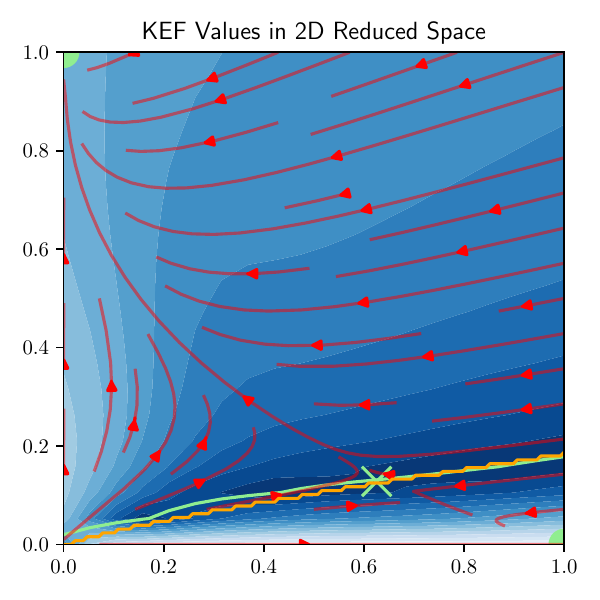}
\caption{KEF approximation in a fitted 11D gLV model of CDI \cite{stein_ecological_2013,jones_steady-state_2019}. Zero level set of the KEF aligns with the separatrix in a 2D projection plane.}
\label{fig:microbiome}
\end{wrapfigure}

\textbf{11D Ecological Dynamics}

We next apply our method (Appendix~\ref{sec:optimisation}) to a high-dimensional ecological model: a generalized Lotka–Volterra (gLV) system fit to genus-level abundance data from a mouse model of antibiotic-induced \textit{Clostridioides difficile} infection (CDI) \cite{stein_ecological_2013}. The system has five stable fixed points. Following \cite{jones_steady-state_2019} we focus our analysis to two of these fixed points representing healthy and diseased microbial states.

We optimize the KEF in the full 11-dimensional state space. For interpretability, we follow the projection approach of \cite{jones_steady-state_2019}, visualizing the dynamics in the 2D plane spanned by the two chosen stable fixed points and the origin (see Figure~\ref{fig:microbiome}). Although the KEF is trained entirely in the original 11-dimensional space, its zero level set (light green curve) aligns well with the true separatrix (orange line) computed using a grid of initial conditions in the 2D slice \cite{jones_steady-state_2019}.

This result demonstrates that our technique can be applied directly to real-world fitted models, without dimensionality reduction at training time.

\textbf{Limit cycle separatrix}

We test our method in a setting where there are no fixed points along the separatrix. We construct a system which oscillates at a fixed frequency ($\dot \theta=1$), but converges to one of two preferred amplitudes ($\dot r = (r-2)-(r-2)^3$). The system has three limit-cycles, two of them stable ($r=1,3$) and one unstable ($r=2$). In Figure~\ref{fig:twolimitcycles}B we visualise the flow, its kinetic energy and the limit cycles. The system has no fixed points, and thus fixed point analysis is futile. We utilize Radial basis function neural network \cite{powell_radial_1987} to parameterise the KEF (Appendix~\ref{sec:archiectures}). 

We show that our approximation of the KEF recovers the separatrix at $r=2$ (Figure~\ref{fig:twolimitcycles}C).

\begin{figure}
    \centering
    \includegraphics[width=\linewidth]{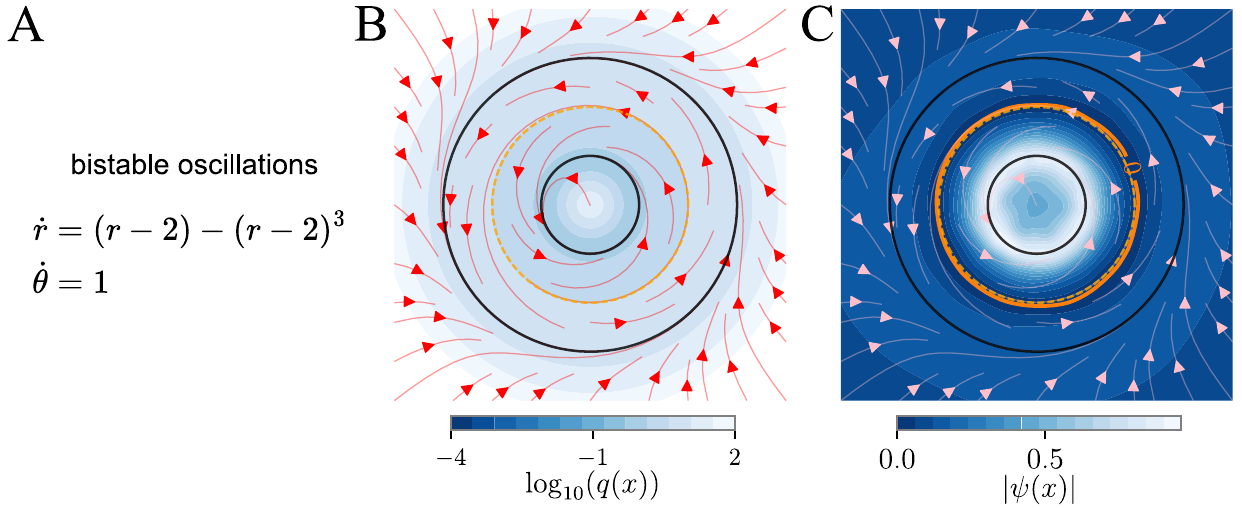}
    \caption{Applying our method to a system of stable and unstable limit cycles, a system without any fixed points on the separatrix. (A) system equations. (B) kinetic energy, with dashed line for separatrix. (C) KEF from our method with zero level highlighted.}
    \label{fig:twolimitcycles}
\end{figure}

\textbf{668D RNN fit to mouse neural activity}

To demonstrate our method in a high-dimensional (see Appendix~\ref{sec:scalingdimensionality} for scaling results) and neuroscientifically relevant setting, we applied it to a recurrent neural network (RNN) trained to reproduce mouse neural activity from \citet{finkelstein_attractor_2021}. The trained RNN exhibits bistability between two memory states. As in lower-dimensional systems, we first located a point on the separatrix by performing a binary search along the line connecting the fixed-point attractors, simulating the dynamics at each step to determine basin membership.

In the original experiment, mice were trained to respond to optogenetic stimulation of their sensory cortices, and the RNN was fit to the peristimulus time histogram of recorded neural activity. The network undergoes a bifurcation from monostability to bistability as a function of an external ramping input $u_\text{ext}$. For analysis, we fixed $u_\text{ext}=0.9$ within the bistable regime and trained the Koopman eigenfunction (KEF) network (Appendix~\ref{sec:optimisation}) using samples drawn from isotropic Gaussian distributions centered at the separatrix point.

Because of the high dimensionality, direct visualization of the learned KEF and dynamics is not feasible. Instead, we validated the model using a curve-based evaluation approach. We construct multiple Hermite polynomial curves that interpolate between the two stable fixed points. The curvature of each curve is parameterized by a random vector, and each is defined by a parameter $\alpha \in [0, 1]$, where $\alpha = 0$ corresponds to one attractor and $\alpha = 1$ to the other (see appendix~\ref{sec:hermite}). Because each curve continuously connects the fixed points, it must cross the separatrix. Figure~\ref{fig:64D 1BFF}A shows a 2D PCA projection of several such Hermite curves. Crucially, the actual curves span the entire 64D space. We simulate dynamics from 100 points along each curve and determine their final basin to infer where each curve crosses the separatrix, forming a ground-truth reference.

Next, we evaluate the learned KEF along these same curves. Figure~\ref{fig:64D 1BFF}B shows KEF values along sample Hermite curves as a function of $\alpha$, with the zero crossing indicating our predicted separatrix. Figure~\ref{fig:64D 1BFF}C compares the $\alpha$-locations of the ground truth and the KEF-predicted separatrix points. We observe strong agreement, indicating that the learned KEF reliably tracks the separatrix in this high-dimensional system.

Finally, we demonstrate how the KEF can be used to design minimal perturbations that shift the state across the separatrix (similar to Figure~\ref{fig:cartoon}A, Figure~\ref{fig:2BFF}C). In general, this involves an input-driven dynamics $\boldsymbol {\dot x} = \tilde f(\boldsymbol x,\boldsymbol u)$, with time-varying inputs $\boldsymbol u(t)$. To demonstrate the utility of the method we study a specific, simplified scenario in which the input is a strong instantaneous perturbation $\boldsymbol u(t):=\boldsymbol \Delta\delta(t)$, which moves the state $\boldsymbol x(0_+)=\boldsymbol x(0_-)+\boldsymbol \Delta$, after which the dynamics evolves according to $f(\boldsymbol x):=\tilde f(\boldsymbol x,0)$. 

Given a base point $\boldsymbol x(0_-):=\boldsymbol x_{\text{base}}$, just before perturbation, we aim to solve:
\begin{align}
\boldsymbol \Delta^* = \argmin_\Delta \Vert \boldsymbol \Delta \Vert_2^2 \quad \text{subject to} \quad \vert \psi(\boldsymbol {x_\text{base}}+\boldsymbol {\Delta}) \vert = 0.
\end{align}
Taking advantage of the differentiable nature of $\psi$ in our method, we use the Adam optimizer \cite{kingma_adam_2017} to find $\boldsymbol \Delta^*$ from random initialization. Figure~\ref{fig:64D 1BFF}D shows that indeed the optimised perturbation $\boldsymbol \Delta^*$ is smaller, compared to simply aiming the perturbation towards the target fixed point, or towards random points on the separatrix.

\begin{figure}
    \centering
    \includegraphics[width=0.9\linewidth]{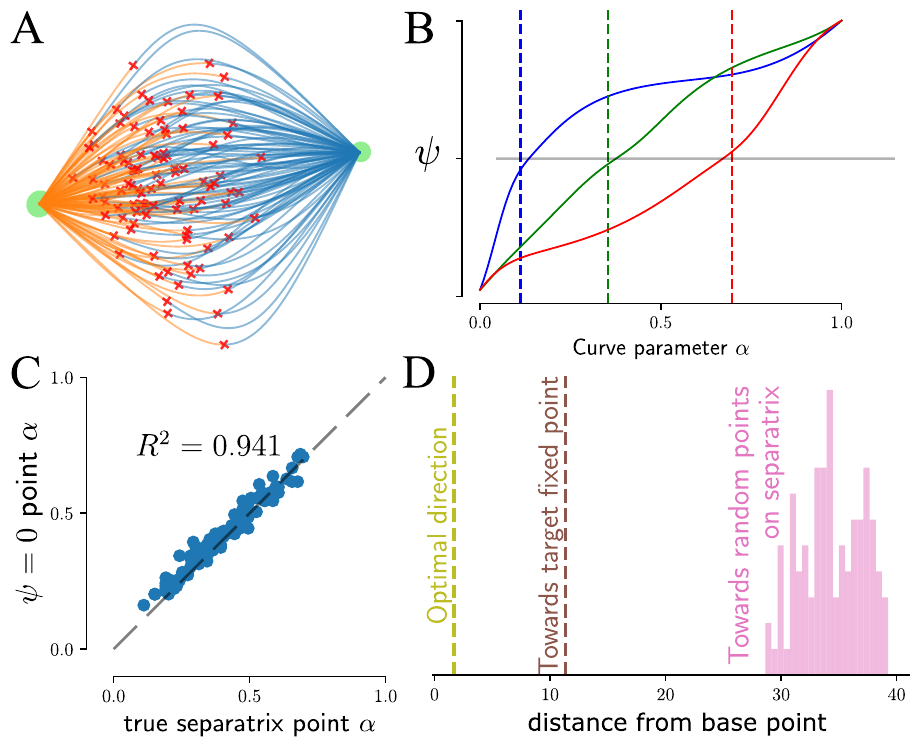}
    \caption{Validation of KEF approximation in a 668D RNN fit to mouse neural activity 
 \cite{finkelstein_attractor_2021}. 
    (A) PCA projection of Hermite curves between fixed points coloured by true basin labels and separatrix points along the curves (red crosses). 
    (B) KEF values along three Hermite curves versus curve parameter $\alpha$, as well the true separatrix point along the curve.
    (C) Comparison between true and predicted separatrix positions along curves. 
    (D) Perturbation amplitudes $\Vert \boldsymbol \Delta \Vert$, i.e, distance from $x_{\text{base}}$ to perturbation targets. The KEF-guided solution yields the smallest perturbation crossing the separatrix.}
    \vspace{-10pt}
    \label{fig:64D 1BFF}
\end{figure}

\section{Discussion}
We presented a novel framework for identifying separatrices in high-dimensional, black-box dynamical systems using  Koopman eigenfunctions (KEFs). This method is particularly useful for analyzing recurrent neural networks (RNNs), which are commonly used to model neural computations involving multiple stable states. 

Prior efforts in reverse-engineering RNNs relied heavily on locating fixed points and linearizing dynamics locally \cite{carnevale_dynamic_2015,maheswaranathan_reverse_2019,maheswaranathan_universality_2019,finkelstein_attractor_2021,mante_context-dependent_2013,liu_encoding_2024,driscoll_flexible_2024,low_remapping_2023,chaisangmongkon_computing_2017,pagan_individual_2025,wang_flexible_2018}. While powerful, these methods cannot directly capture global structures or predict system responses to large perturbations that cross basin boundaries. By directly approximating scalar-valued KEFs that vanish precisely on separatrices, our method complements and extends existing local linearization approaches. Practitioners can use our KEFs alongside fixed-point analysis to achieve a comprehensive understanding of the dynamical system’s landscape.

While FTLE methods \cite{haller_lagrangian_2000,tanaka_separatrices_2009,haller_lagrangian_2015} also identify separatrices as ridges of finite-time trajectory divergence, they may change sharply near separatrices while providing little gradient elsewhere. We speculate that this could limit their usefulness in high-dimensional systems, where gradient-based localization is needed. Moreover, differentiating FTLE requires differentiating through the dynamical function, which may be computationally expensive or even infeasible due to vanishing/exploding gradients. In contrast, our Koopman eigenfunction framework, by integrating globally, provides a smooth scalar field whose zero level set identifies the separatrix, enabling efficient gradient-based searches without repeated forward simulations of the target system.

Our work also advances the application of Koopman operator theory to dynamical systems. Previous studies primarily utilized Koopman eigenfunctions to predict or control dynamics within a single basin of attraction \cite{lusch_deep_2018,cohen_latent_2023,mezic_spectral_2005,mezic_koopman_2016,meng_koopman-based_2024}. Likewise, methods comparing dynamical systems to one another use the dynamic mode decomposition which does not always discern between different basins \cite{ostrow_beyond_2023,redman_identifying_2024,mezic_comparison_2004}.  Such studies usually involve KEFs associated with negative eigenvalues ($\lambda$<0), which exhibit opposite behavior to ours: they explode at separatrices and approach zero at attractors. In contrast, we specifically targeted eigenfunctions associated with positive eigenvalues, ensuring their zeros correspond exactly to separatrices. 

To help practitioners use our method, we highlight inherent challenges, such as degeneracy in the Koopman PDE, where multiple solutions exist. To overcome these, we introduced regularization strategies, notably a balance term ensuring eigenfunctions change sign across different basins. Additionally, ensemble training and targeted sampling methods were used to resolve directional ambiguities and ensure comprehensive coverage of separatrix geometry. These mitigations join existing work on KEF approximation \cite{deka_koopman-based_2022,yeung_learning_2019,deka_supervised_2023} and enabled reliable identification of separatrices in diverse and high-dimensional systems.

Beyond the conceptual aspect of using positive KEFs to extract separatrices, and the qualitative manners in which this procedure can go wrong, there is also a computational aspect. While we did not elaborate on this here, it is important to note that finding the KEF is an iterative procedure in phase \textit{space}. In contrast, it is possible to use grid searches and bisections to simulate the ODE in many locations to search for the separatrix \cite{jones_steady-state_2019}. The computational load of the latter approach scales with \textit{time}, and requires computing the dynamics associated with the same area in phase space many times. In contrast, solving the PDE is a form of dynamic programming that can be made more efficient (see Appendix~\ref{sec:scalingdimensionality}). Specifically for the task of locating separatrices, critical slowing down can make solving the ODE computationally demanding.

While we demonstrated the applicability of our method to diverse scenarios, we do not provide theoretical guarantees linking the accuracy of the KEF approximation and that of the separatrix location. Furthermore, like techniques for finding fixed points \cite{sussillo_opening_2013,katz_using_2018}, our method requires knowing the dynamics in the entire phase space. In the future, extending this to trajectory-based methods \cite{turner_charting_2021,brennan_one_2023,ostrow_beyond_2023} can facilitate the application of the method in neuroscience settings.

An interesting extension of our work is to stochastic dynamical systems \cite{linderman_bayesian_2017,duncker_learning_2019}. While one can approximate separatrices using only the deterministic component of the dynamics, a full treatment of stochasticity and its effect on basin boundaries remains open. Likewise, data-driven models with uncertainty in the inferred flow \cite{hu_modeling_2024} raise questions about how this uncertainty propagates to the separatrix. Since separatrices often occupy sparsely sampled or unstable regions of phase space, this motivates active sampling---potentially through targeted optogenetic stimulations---to improve model accuracy along these boundaries \cite{wagenmaker_active_2024}.

In conclusion, we hope that by focusing on separatrices, our method could inform intervention strategies in neuroscience, ecological or engineering systems, providing a general-purpose tool to predict and control transitions between stable states in complex dynamical landscapes.

\section*{Acknowledgements}
This work was supported by the Israel Science Foundation (grant No. 1442/21 to OB) and Human Frontiers Science Program (HFSP) research grant (RGP0017/2021 to OB). The funders had no role in study design, data collection and analysis, decision to publish, or preparation of the manuscript.

\bibliographystyle{unsrtnat}
\bibliography{references}

\appendix
\section{Analytical KEF derivation in 1D bistable system}\label{sec:1D analytical}
We would like to find an analytical Koopman eigenfunction for the scalar dynamical system:
\begin{align}
\dot{x} &= x - x^3
\end{align}

In the 1D case, the Koopman PDE \eqref{eq:koopmanPDE} reduces to a first-order ordinary differential equation
\begin{align}
\frac{d\psi}{dx} f(x) &= \lambda \psi(x).
\end{align}

With  $f(x) = x - x^3$ and $\lambda = 1$ we have:
\begin{align}
\psi'(x)(x - x^3) &= \psi(x)\\
\Rightarrow \quad \frac{\psi'(x)}{\psi(x)} &= \frac{1}{x - x^3}
\end{align}

To solve this integral we first simplify the integrand.
\begin{align}
x - x^3 &= x(1 - x^2) = x(1 - x)(1 + x)
\end{align}
So,
\begin{align}
\frac{1}{x(1 - x)(1 + x)} &= \frac{A}{x} + \frac{B}{1 - x} + \frac{C}{1 + x}
\end{align}

Solving for $A$, $B$, $C$ yields $A=1$, $B=\frac{1}{2}$, $C=-\frac{1}{2}$.

Now we can integrate,
\begin{align}
\int \frac{1}{x - x^3} \, dx &= \int \left( \frac{1}{x} + \frac{1}{2(1 - x)} - \frac{1}{2(1 + x)} \right) dx \\
&= \log |x| - \frac{1}{2} \log |1 - x| - \frac{1}{2} \log |1 + x| + C
\end{align}

\begin{align}
\log \psi(x) &= \log |x| - \frac{1}{2} \log |1 - x| - \frac{1}{2} \log |1 + x| + C\\
\Rightarrow \quad \psi(x) &= C' \cdot \frac{|x|}{\sqrt{|1 - x^2|}}
\end{align}

To bring it into the form in the main text we use the product composition rule \eqref{eq:KEFproductrule}. We can multiply our solution by the $\sign(x)$ function which is a $\lambda=0$ eigenfunction because it remains constant in each basin (see Figure~\ref{fig:challenges}A). In other words, we flip the sign of our solution $\psi(x)\to -\psi(x)$ for $x<0$.

\begin{align}
\psi(x) &= C'\frac{x}{\sqrt{|1 - x^2|}}
\end{align}

and this remains a KEF with $\lambda=1$.

\section{Eigenfunction Degeneracy in higher dimensions}\label{sec:analytical2Ddegeneracy}
Consider a separable 2D dynamical system:
\begin{align}
\dot{x} &= f_x(x), \\
\dot{y} &= f_y(y),
\end{align}
which we write compactly as:
\begin{align}
\dot{\mathbf{x}} = \mathbf{f}(x, y) = 
\begin{bmatrix} 
f_x(x) \\ 
f_y(y) 
\end{bmatrix}.
\end{align}

We seek a Koopman eigenfunction $\psi(x, y)$ satisfying:
\begin{align}
\nabla \psi \cdot \mathbf{f}(x, y) = \lambda \psi(x, y).
\end{align}

Assume $\lambda = 1$ and a separable form $\psi(x, y) = X(x) Y(y)$. Then:
\begin{align}
\frac{\partial \psi}{\partial x} &= X'(x) Y(y), \\
\frac{\partial \psi}{\partial y} &= X(x) Y'(y), \\
\nabla \psi \cdot \mathbf{f} &= X'(x) Y(y) f_x(x) + X(x) Y'(y) f_y(y) = X(x) Y(y).
\end{align}

Dividing both sides by $X(x) Y(y)$ gives:
\begin{align}
\frac{X'(x)}{X(x)} f_x(x) + \frac{Y'(y)}{Y(y)} f_y(y) = 1.
\end{align}

The above equation requires that the sum of the above two terms, which each depend on different variables must be 1 for all $x$, $y$. It follows that each term is also a constant function.
\begin{align}
\frac{X'(x)}{X(x)} f_x(x) &= \mu, \\
\frac{Y'(y)}{Y(y)} f_y(y) &= 1 - \mu,
\end{align}
for an arbitrary constant $\mu \in \mathbb{R}$.

Define the antiderivatives:
\begin{align}
A(x) &= \int \frac{1}{f_x(x)} dx, \\
B(y) &= \int \frac{1}{f_y(y)} dy.
\end{align}

Then the logarithms of the separated components are:
\begin{align}
\log X(x) &= \mu A(x) 
\quad \Rightarrow \quad 
X(x) = \left( e^{A(x)} \right)^{\mu}, \\
\log Y(y) &= (1 - \mu) B(y) 
\quad \Rightarrow \quad 
Y(y) = \left( e^{B(y)} \right)^{1 - \mu}.
\end{align}

Thus, the general separable Koopman eigenfunction is:
\begin{align}
\psi(x, y) = \left( e^{A(x)} \right)^{\mu} \cdot \left( e^{B(y)} \right)^{1 - \mu}.
\end{align}

\section{Relation of our definition to the Koopman Operator}\label{sec:koopmanoperator}

In the main text, we introduced Koopman eigenfunctions as scalar functions $\psi: \mathcal{X} \to \mathbb{R}$ that evolve exponentially along trajectories $\boldsymbol{x}(t) \in \mathcal{X}$ of a dynamical system $\dot{\boldsymbol{x}} = f(\boldsymbol{x})$:
\begin{align}
\frac{d}{dt}\psi(\boldsymbol{x}(t)) = \lambda \psi(\boldsymbol{x}(t)).
\end{align}
Here, we clarify the origin of this equation by defining the Koopman operator, linking our approach to the broader theory.

Let $g: \mathcal{X} \to \mathbb{R}$ be a real-valued function of the system state—commonly referred to as an \emph{observable}. The collection of such observables forms an infinite-dimensional function space, typically a Hilbert space once equipped with an inner product $\langle g, g'\rangle$. The Koopman operator acts linearly on this space.

\smallskip

\noindent
\textit{Remark.} When studying nonlinear dynamical systems with multiple basins of attraction, as we do, the corresponding Koopman eigenfunctions are generally \emph{not} square-integrable, and therefore fall outside the Hilbert space defined by the standard inner product. We are aware of this theoretical limitation and continue to employ the Koopman framework regardless. In practice, neural networks learn finite, smooth approximations to these otherwise singular structures.

For a continuous-time system, the Koopman operator $\mathcal{K}_\tau$ evolves observables according to the flow map $F_\tau : \mathcal{X} \to \mathcal{X}$, which advances the state forward by time $\tau$:
\begin{align}
(\mathcal{K}_\tau g)\bigl(\boldsymbol{x}(t)\bigr) 
= g\Bigl(F_\tau\bigl(\boldsymbol{x}(t)\bigr)\Bigr) 
= g\bigl(\boldsymbol{x}(t+\tau)\bigr).
\end{align}

The infinitesimal generator of the Koopman semigroup $\{\mathcal{K}_\tau\}_{\tau\geq 0}$, often denoted simply as $\mathcal{K}$, is defined as:
\begin{align}
\mathcal{K}g 
:= \lim_{\tau \to 0} 
\frac{\mathcal{K}_\tau g - g}{\tau}
= \lim_{\tau \to 0} 
\frac{g\Bigl(F_\tau\bigl(\boldsymbol{x}\bigr)\Bigr) - g\bigl(\boldsymbol{x}\bigr)}{\tau}.
\end{align}

When evaluated along a trajectory $\boldsymbol{x}(t)$, this yields:
\begin{align}
\mathcal{K}g\bigl(\boldsymbol{x}(t)\bigr) 
&= \lim_{\tau \to 0} 
\frac{g\bigl(\boldsymbol{x}(t+\tau)\bigr) - g\bigl(\boldsymbol{x}(t)\bigr)}{\tau} \\
&= \frac{d}{dt}\, g\bigl(\boldsymbol{x}(t)\bigr) 
= \nabla g\bigl(\boldsymbol{x}(t)\bigr) \cdot \boldsymbol{\dot x}(t)
= \nabla g\bigl(\boldsymbol{x}(t)\bigr) \cdot f\Bigl(\boldsymbol{x}(t)\Bigr).
\end{align}
This operator is also known as the \emph{Lie derivative} of $g$ along the vector field $f$.

Thus, an eigenfunction $\psi$ of $\mathcal{K}$ satisfying
\begin{align}
\mathcal{K} \psi = \lambda \psi
\end{align}
recovers the Koopman eigenfunction equation \eqref{eq:koopmanPDE} used in the main text.

\section{KEF degeneracy in randomly initialised DNN solutions}\label{sec:DNNdegeneracy}
Main text Figure \ref{fig:challenges} illustrates challenges arising due to the degeneracy of the Koopman PDE \eqref{eq:koopmanPDE}. In Figure \ref{fig:degeneracy DNN}, we train several DNNs on a 2-unit GRU trained on the 2BFF. Each DNN is independently initialised and trained on a single distribution without the balance regularisation term $\mathcal L_\text{bal}$, i.e., $\gamma_\text{bal}=0$. The resulting KEF approximations exhibit the same modes of degeneracy - zero on certain basins as well as vertical and horizontal variants.
\begin{figure}[ht!]
    \centering
    \includegraphics[width=\linewidth]{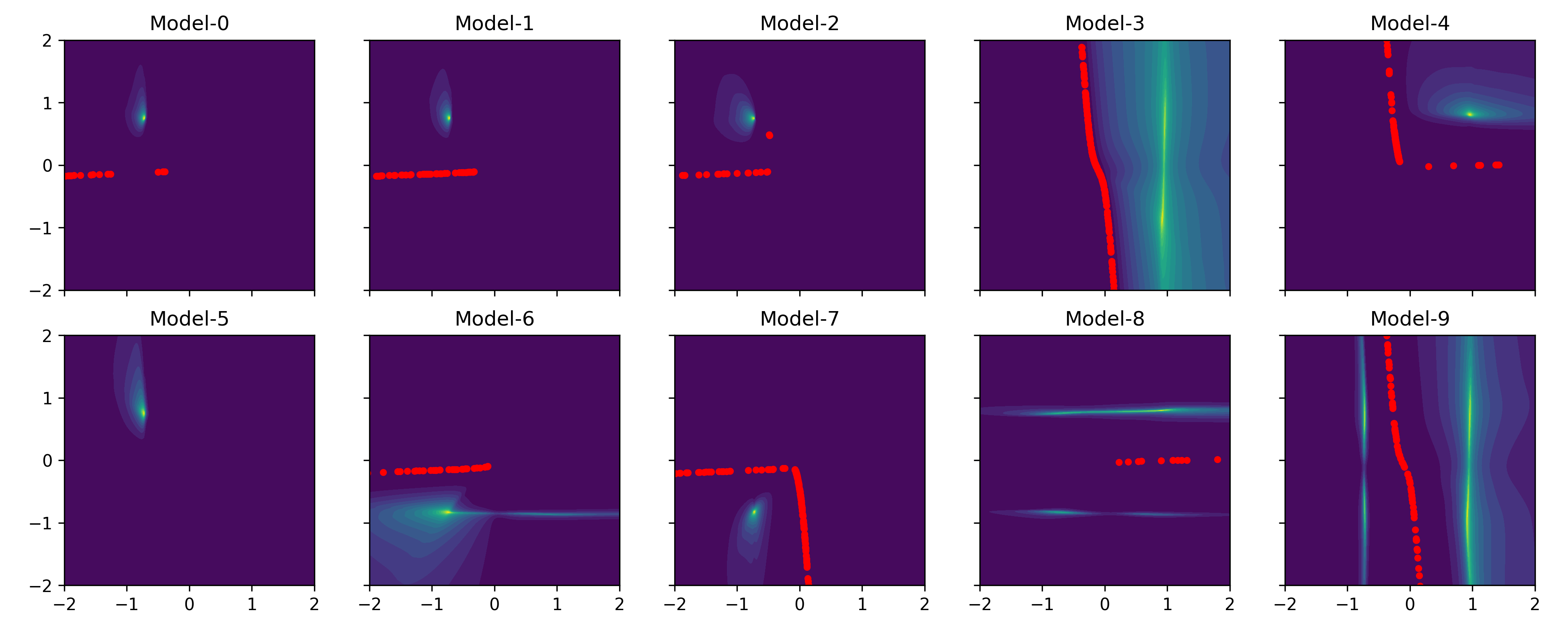}
    \caption{Many KEFs of to for 2 bit flip flop in 2D}
    \label{fig:degeneracy DNN}
\end{figure}

\section{Curve-based validation approach}\label{sec:hermite}

In high dimension, we cannot visualize the entire phase space to check whether zeros of the KEF coincide with the separatrix. Instead, we generate a family of smooth curves that connect two attractors, and hence must pass through a separatrix. In the 64D GRU flip flop example, the two attractors are the two stable fixed points  \( x, y \in \mathbb{R}^N \). We use \textit{cubic Hermite interpolation} with randomized tangent vectors at the endpoints. Each curve is defined by:

\begin{align}
H(\alpha) &= h_{00}(\alpha)\, x + h_{10}(\alpha)\, m_x + h_{01}(\alpha)\, y + h_{11}(\alpha)\, m_y, \quad \alpha \in [0,1]
\end{align}

where the Hermite basis functions are:
\begin{align}
h_{00}(\alpha) &= 2\alpha^3 - 3\alpha^2 + 1, \\
h_{01}(\alpha) &= -2\alpha^3 + 3\alpha^2, \\
h_{10}(\alpha) &= \alpha^3 - 2\alpha^2 + \alpha, \\
h_{11}(\alpha) &= \alpha^3 - \alpha^2.
\end{align}

Notice that $H(0)=x$ and $H(1)=y$.

The tangent vectors \( m_x \) and \( m_y \) are initialized as \( y - x \) and perturbed with Gaussian noise:
\begin{align}
m_x = (y - x) + \epsilon_x, \quad m_y = (y - x) + \epsilon_y, \quad \epsilon_x, \epsilon_y \sim \mathcal{N}(0, \sigma^2 I).
\end{align}

We sample multiple such curves with independently drawn perturbations. This produces a family of curves that interpolate between \( x \) and \( y \), while varying in geometry, enabling randomized exploration of intermediate regions in state space. Crucially, the curves are not limited to the manifold spanned by the attractors, but extend to all dimensions (controlled by $\sigma$. Optional constraints (e.g., non-negativity) can be imposed by rejecting any curve that violates them. For each such curve, we both evaluate the KEF and simulate the ODE to determine the position of the separatrix.

\section{Neural network architectures}\label{sec:archiectures}
In most of the demonstrations we use a ResNet architecture \cite{he_deep_2016} with a tanh activation function.

Let the input to the network be $\mathbf{x}_\text{in} \in \mathbb{R}^{d_{\text{in}}}$ and define the hidden layer activations $\mathbf{x}_\text{hid}\in \mathbb R^{d_{\text{hid}}}$, output dimension $\mathbf{x}_\text{out}\in\mathbb R^{d_{\text{out}}}$, and number of layers $L$. In some cases
Let the input to the network be $\mathbf{x}_\text{in} \in \mathbb{R}^{d_{\text{in}}}$ and define the hidden layer activations $\mathbf{x}_\text{hid}\in \mathbb R^{d_{\text{hid}}}$, output dimension $\mathbf{x}_\text{out}\in\mathbb R^{d_{\text{out}}}$, and number of layers $L$.

The network receives inputs at the first layer $\mathbf x^{(0)}=\text{Pad}(a\mathbf x_\text{in}$), where $\text{Pad}$ appends zeroes to the input as we always choose $d_\text{hid}>d_\text{in}$, and $a\in\mathbb R_+$ is a scalar that we set to ensure elements of $\mathbf{x}^{(0)}$ are $\mathcal O(1)$. The network then transforms the input at each layer $l$,
 
\begin{align}
\mathbf{x}^{(\ell+1)} = \mathbf{x}^{(\ell)} + \tanh\left(W^{(\ell)} \mathbf{x}^{(\ell)} + \mathbf{b}^{(\ell)}\right), \quad \ell = 1, \dots, L-1,
\end{align}
where $W^{(l)}\in\mathbb R^{d_\text{hid} \times d_\text{hid}}$ and $\mathbf{b} \in \mathbb R^{d_\text{hid}}$

The output is the final Residual layer activation after applying another linear layer to match the desired $d_\text{out}$: \begin{align}
\mathbf{x}_\text{out}=W^\text{out}\mathbf{x}^{(L)}+\mathbf{b}^\text{out},
\end{align}

where $W^\text{out}\in\mathbb R^{d_\text{out} \times d_\text{hid}}$ and $\mathbf{b}^\text{out} \in \mathbb R^{d_\text{out}}$. During optimization, gradients $\nabla\theta \mathcal L_\text{total}$ are computed for all parameters $\theta=(W^{(1:L-1)},\textbf{b}^{(1:L-1)},W^\text{out},\textbf{b}^\text{out})$.

\subsection*{Choices for each system}

For the results in Figures~\ref{fig:1D_and_2Dexamples}, \ref{fig:2BFF}, \ref{fig:64D 1BFF}, we use $L=20$, $d_\text{hid}=400$. $d_\text{out}=1$ and $d_\text{in}=N$ the dimension of the dynamical system. For the gLV system in Figure~\ref{fig:microbiome} we use $L=25$ and $d_\text{hid}=1000$.

\subsection*{Radial Basis Function (RBF) Layer}
For the limit cycles example in Figure~\ref{fig:twolimitcycles} we use a single Radial Basis Function layer \cite{powell_radial_1987}.

Given an input $\mathbf{x} \in \mathbb{R}^{d_{\text{in}}}$, the RBF layer maps it to an output $\mathbf{y} \in \mathbb{R}^{d_{\text{out}}}$ through a set of $M$ radial basis functions, each centered at $\mathbf{c}_i \in \mathbb{R}^{d_{\text{in}}}$, with a shape parameter $\varepsilon_i > 0$ and linear combination weights $a_{ij} \in \mathbb{R}$.

To compute RBF activations
for $i = 1, \dots, M$, we define the scaled  radial distance:
\begin{align}
s_i(\mathbf{x}) = \varepsilon_i \cdot  \left\| \mathbf{x} - \mathbf{c}_i \right\|,
\end{align}
and then apply a gaussian radial basis function
\begin{align}
\varphi_i(\mathbf{x}) = \exp(-s_i(\mathbf{x})^2).
\end{align}
The final output is a linear combination of the basis activations:
\begin{align}
y_j(\mathbf{x}) = \sum_{i=1}^M a_{ji} \cdot \varphi_i(\mathbf{x}), \quad j = 1, \dots, d_{\text{out}}
\end{align}

We use $M=300$, and $d_\text{out}=1$. During optimization, gradients $\nabla\theta \mathcal L_\text{total}$ are computed for all parameters $\theta=(\{a_{ji}\},\{\mathbf{c}_i\},\{\varepsilon_i\})$.

\section{Optimisation}\label{sec:optimisation}

Our optimisation procedure described in the main text is summarised in Algorithm~\ref{alg:kef-train-modular-abstract}. 
It implements the training of the neural network Koopman eigenfunction using the ratio loss and balance regularisation, for multiple sampling distributions. 

\begin{algorithm}
\caption{Train Koopman Eigenfunction Network}
\label{alg:kef-train-modular-abstract}
\begin{algorithmic}[1]
\Require Sampling distributions $\{p_j(\boldsymbol{x})\}_{j=1}^J$; vector field $f$; eigenvalue $\lambda$; neural network architecture $\psi_\theta$; batch size $B$; iterations $T$; balance weight $\gamma_{\mathrm{bal}}$; learning rate $\eta$; small $\varepsilon=10^{-12}$
\State Initialize $\theta$
\For{$t = 1 \to T$}
    \State $L_{\text{total}} \gets 0$
    \For{$j = 1 \to J$}
        \State $\{\boldsymbol{x}_i^j, \psi_i^j, \nabla\psi_i^j\} \gets$ \textsc{SampleAndEvaluate}($p_j, \psi_\theta, B$)
        \State $L_{\text{ratio}}^j \gets$ \textsc{ComputeRatioLoss}($\{\psi_i^j, \nabla\psi_i^j, f(\boldsymbol{x}_i^j), \lambda\}$)
        \State $L_{\text{bal}}^j \gets$ \textsc{ComputeBalanceLoss}($\{\psi_i^j\}$)
        \State $L_{\text{total}} \gets L_{\text{total}} + L_{\text{ratio}}^j + \gamma_{\mathrm{bal}}\,L_{\text{bal}}^j$
    \EndFor
    \State Compute gradients of $L_{\text{total}}$ w.r.t. $\theta$
    \State Update weights $\theta$ using gradients and learning rate $\eta$
\EndFor
\State \Return trained parameters $\theta$

\Procedure{SampleAndEvaluate}{$p_j, \psi_\theta, B$}
    \State Sample $\{\boldsymbol{x}_i^j\}_{i=1}^B \sim p_j(\boldsymbol{x})$
    \State Compute $\psi_i^j \gets \psi_\theta(\boldsymbol{x}_i^j)$
    \State Compute $\nabla\psi_i^j \gets \nabla_{\boldsymbol{x}}\psi_\theta(\boldsymbol{x}_i^j)$
    \State \Return $\{\boldsymbol{x}_i^j, \psi_i^j, \nabla\psi_i^j\}$
\EndProcedure

\Procedure{ComputeRatioLoss}{$\{\psi_i^j, \nabla\psi_i^j, f(\boldsymbol{x}_i^j), \lambda\}$}
    \State Compute $\mathrm{LHS}_i^j = \nabla\psi_i^j \cdot f(\boldsymbol{x}_i^j)$
    \State Compute $\mathrm{RHS}_i^j = \lambda\,\psi_i^j$
    \State Draw random permutation $\pi$ of $\{1,\dots,B\}$
    \State $n_j \gets \sum_i (\mathrm{LHS}_i^j - \mathrm{RHS}_i^j)^2$
    \State $d_j \gets \sum_i (\mathrm{LHS}_i^j - \mathrm{RHS}_{\pi(i)}^j)^2$
    \State \Return $L_{\text{ratio}}^j = n_j / (d_j + \varepsilon)$
\EndProcedure

\Procedure{ComputeBalanceLoss}{$\{\psi_i^j\}$}
    \State $\bar\psi^j = \frac{1}{B} \sum_i \psi_i^j$
    \State $v^j = \frac{1}{B} \sum_i (\psi_i^j - \bar\psi^j)^2$
    \State \Return $L_{\text{bal}}^j = (\bar\psi^j)^2 / (v^j + \varepsilon)$
\EndProcedure
\end{algorithmic}
\end{algorithm}

We minimise the total loss:
\begin{align}
\mathcal L_\text{total} = \sum_{j=1}^J \mathcal{L}^j_{\text{ratio}} + \gamma_{\text{bal}} \mathcal{L}^j_{\text{bal}}.
\end{align}
where $j$ corresponds to the $j^\text{th}$ sampling distribution (see main text section \ref{subsec:challenges_and_solutions}). $B$ $N$-dimensional points in the state space $\mathcal X$ are sampled from each distribution $\boldsymbol{x}^j_i\sim p_j(\boldsymbol{x})$. The ratio loss is the Koopman PDE error, normalised by a sample-shuffled version:
\begin{align}
    \mathcal{L}_{\text{ratio}}^j &= \frac{\sum_{i=1}^B(\text{LHS}_i^j-\text{RHS}_i^j)^2}{\sum_{i=1}^B(\text{LHS}_i^j-\text{RHS}_{\text{perm}(i)}^j)^2}\\
    \text{LHS}_i^j &= \nabla \psi(\boldsymbol x_i^j) \cdot f(\boldsymbol x_i^j) & \text{left-hand-side of the Koopman PDE \eqref{eq:koopmanPDE}}\\
    \text{RHS}_i^j &= \lambda\psi(\boldsymbol x_i^j) &  \text{right-hand-side of the Koopman PDE \eqref{eq:koopmanPDE}}
\end{align}
where $\text{perm}(i)$ is a random permutation of the numbers $1,2,\dots,B$ sampled during each training iteration. 

The balance regularisation loss is the squared mean of the KEF values divided by their variance:

\begin{align}
    \mathcal{L}_{\text{bal}}^j &= \frac{(\bar \psi^j)^2}{\frac{1}{B}\sum_{i=1}^B(\psi(\boldsymbol x_i^j)-\bar\psi_j)^2},\\
    \bar \psi^j &= \frac{1}{B}\sum_{i=1}^B\psi(\boldsymbol x_i^j).
\end{align}

In general we set $\gamma_\text{bal}=0.05$. For the limit cycles Figure~\ref{fig:twolimitcycles} we set $\gamma_\text{bal}=0$.

We compute $\nabla \psi(x)$ using Pytorch's \texttt{torch.autograd.grad}, specifying \texttt{create\_graph=True}, since we differentiate through this a second time to compute the gradients $\nabla_\theta \mathcal L_\text{total}$ with respect to the neural network parameters $\theta$.

We use the Adam optimiser \cite{kingma_adam_2017} with learning rate $10^{-4}$ and l2 normalisation $10^{-5}$. We use $B=1000$ and train for $1000$ iterations.

Only in the case of the 11D gLV, Figure~\ref{fig:microbiome} we use $B=5000$ and train for $5000$ iterations.

A summary of all hyperparameters is provided in Table \ref{tab:all-hyperparams}.

\begin{table}
  \caption{Algorithm details and hyperparameters for various systems. System dimensionality $N$, Koopman eigenvalue $\lambda$, balance regularisation weight $\gamma_\text{bal}$, batch-size $B$, training iterations $T$, learning rate $\eta$, ResNet depth $L$ and width $d_\text{hid}$, number of Radial Basis Functions $M$.}
  \label{tab:all-hyperparams}
  \centering
  \begin{tabular}{llllllllll}
    \toprule
    Dynamical System&  $N$&$\lambda$ & $\gamma_\text{bal}$   &$B$ &$T$ &$\eta$&$L$ &$d_\text{hid}$ &$M$\\
    \midrule
    Bistable 1D&  $1$&$1$&$0.05$  &$1000$ & $1000$ &$10^{-4}$&$20$ &$400$ &--\\
    Damped Duffing oscillator&  $2$&$1$& $0.05$ &$1000$ & $1000$ &$10^{-4}$&$20$ &$400$ &--\\
    1BFF, 2D GRU&  $2$&$1$& $0.05$ &$1000$ & $1000$ &$10^{-4}$&$20$ &$400$ &--\\
    2BFF, 3D GRU&  $3$&$0.2$ & $0.05$  &$1000$ & $1000$ &$10^{-4}$& $20$& $400$&--\\
    1BFF, 64D& $64$ & $0.1$ & $0.05$ &$1000$ & $1000$ &$10^{-4}$& $20$& $400$&--\\
    Two Limit Cycles&  $2$&$1$& $0$ &$1000$ & $1000$ &$10^{-4}$& --& --&$300$\\
    Ecology gLV & $11$ & $0.1$ & $0.05$ &$5000$ & $5000$ &$10^{-4}$& $25$& $1000$&--\\
    Data-trained RNN \cite{finkelstein_attractor_2021} & $668$ & $0.02$ & $0.05$ &$1000$ & $1000$ &$10^{-4}$& $7$& $1200$&--\\
    \bottomrule
  \end{tabular}
\end{table}

\subsection{Choice of Training Distributions}
\label{app:training_distributions}

Choosing suitable training distributions $p_j(\boldsymbol{x})$ is an important step in applying our method. The distributions are selected to satisfy the following criteria:
\begin{enumerate}
    \item They are approximately bisected by the separatrix, ensuring roughly equal sampling from both basins and enabling the balance loss to be satisfied.
    \item They sample sufficiently near the attractors to capture the global bistable dynamics.
    \item They are approximately forward-invariant, i.e., trajectories initialized from the distribution remain within its support when evolved forward in time. This avoids loss of mass due to transient amplification along unstable directions.
    \item In systems with multiple spurious attractors, they avoid sampling from basins outside the domain of interest.
\end{enumerate}

Below we list the specific distributions used for each system.

\noindent
\textbf{1D bistable system (Fig.~\ref{fig:1D_and_2Dexamples}C):} 
$[\mathcal{N}(0,1),\, \mathcal{N}(0,3)]$.

\noindent
\textbf{2D damped Duffing oscillator (Fig.~\ref{fig:1D_and_2Dexamples}F):} 
$\mathcal{N}(\boldsymbol{0},\, \sigma_j^2 \mathbf{I}_2)$, with $\sigma_j = [0.1, 0.5, 1.0, 2.0]$, where $\mathbf{I}_2$ is the $2 \times 2$ identity matrix.

\noindent
\textbf{2D 1-bit flip-flop GRU (Fig.~\ref{fig:1D_and_2Dexamples}I):} 
$\mathcal{N}(\boldsymbol{0},\, \sigma_j^2 \mathbf{I}_2)$, with $\sigma_j = [0.01, 0.1, 0.5, 1.0, 2.0, 4.0]$.

\noindent
\textbf{3D GRU 2-bit flip-flop (Fig.~\ref{fig:2BFF}):} 
$\mathcal{N}(\boldsymbol{\mu},\, \sigma_j^2 \mathbf{I}_3)$, with $\sigma_j = [0.01, 0.05, 0.2, 1.0, 5.0]$. 
The mean $\boldsymbol{\mu}$ is chosen as a point on the separatrix found by interpolating between two attractors and performing iterative binary search with ODE~\eqref{eq:ODE}. A second $\boldsymbol{\mu}$ is obtained by interpolating a different attractor pair for training the second KEF.

\noindent
\textbf{11D ecological dynamics (Fig.~\ref{fig:microbiome}):} 
We similarly identify a single separatrix point $\boldsymbol{\mu}$. Each coordinate $x[i]$ is sampled independently from a Gamma distribution $x[i] \sim \Gamma(\alpha[i], \beta[i])$, where $\alpha[i]$ and $\beta[i]$ are chosen such that the mode of $x[i]$ equals $\mu[i]$ and the variance equals $\sigma_j^2$, with $\sigma_j = [0.01, 0.1, 0.3, 1.0]$.

\noindent
\textbf{Data-trained RNN \cite{finkelstein_attractor_2021} (Fig.~\ref{fig:64D 1BFF}):} 
$\mathcal{N}\!\left(\boldsymbol{\mu},\, \sigma_j^2 \tilde{\Sigma}\right)$, where we first construct a distribution which is oblongated along the direction of the attractors and isotropic along the remaining directions:
\[
\Sigma = \sigma_B^2 \mathbf{I}_N + (\sigma_A^2 - \sigma_B^2)\, \mathbf{u}\mathbf{u}^\top,
\]
with
\begin{itemize}
    \item $\mathbf{u} \in \mathbb{R}^N$ a unit vector along the attractor axis, i.e. $\mathbf{u} = (\mathbf{a} - \mathbf{b}) / \Vert \mathbf{a} - \mathbf{b} \Vert$, where $\mathbf{a}$ and $\mathbf{b}$ are the two attractors,
    \item $\sigma_A > 0$ the standard deviation along $\mathbf{u}$,
    \item $\sigma_B > 0$ the standard deviation along all orthogonal directions, and
    \item $\mathbf{I}_N$ the $N \times N$ identity matrix.
\end{itemize}

As before $\boldsymbol \mu$ is the point on the separatrix along the line joining the attractors.

We set $\sigma_A$ to include both attractors, and choose $\sigma_B$ as large as possible while avoiding spurious basins. 
For 300 samples drawn from $\mathcal{N}(\boldsymbol{\mu}, \Sigma)$, we evolve the dynamics forward for 3.0~s and estimate the covariance of the resulting approximately forward-invariant distribution, denoted $\tilde{\Sigma}$.

\section{Scaling of compute with Dimensionality}
\label{sec:scalingdimensionality}
We ran all experiments on a system with four GeForce GTX 1080 GPUs with 10 Gbps of memory each.

All the 2D systems take 1-5 minutes to train the KEFs. The 11D gLV takes up to 20 minutes. The 668D data-trained RNN \cite{finkelstein_attractor_2021} takes 5 minutes to train the KEF. 

Scalability to high dimensions is a key strength of our approach. We evaluated the scaling behavior of our method on vanilla RNNs of varying sizes, each trained on the 1-bit flip-flop task. In table \ref{tab:training_time}, we report the wall-clock training time and curve-based validation performance for learning a single Koopman eigenfunction:

\begin{table}[h!]
\caption{Training time and performance across dimensionalities. Vanilla RNNs with $N$ units were trained on the 1-bit flip-flop task. We then applied our method to each resulting dynamical system $f$ to approximate its corresponding Koopman eigenfunction.}
\label{tab:training_time}
\centering
\begin{tabular}{c c c}
\toprule
\textbf{Dimension $N$} & \textbf{Wall-clock time (s)} & \textbf{Curve $R^2$} \\
\midrule
32  & 529 & 0.997 \\
64  & 527 & 0.995 \\
128 & 572 & 0.867 \\
256 & 611 & 0.996 \\
512 & 718 & 0.996 \\
\bottomrule
\end{tabular}

\end{table}
For each case, we verified good agreement with the ground truth using the curve-based validation metric (Fig.~\ref{fig:64D 1BFF}A–C). All models used a fixed DNN architecture with depth 20 and width 550. For larger $N$, we expect that the network width must scale with dimensionality. Since our method involves solving an $N$-dimensional PDE, its scaling behavior is expected to resemble that of Physics-Informed Neural Networks (PINNs) \cite{raissi_physics-informed_2019} and other PDE-solving neural methods \cite{e_deep_2018}. In particular, \citet{lu_priori_2021} proved $N$-independent generalization error bounds for the Deep Ritz Method within a class of PDEs. Addressing the curse of dimensionality remains an active area of research \cite{han_solving_2018,hu_tackling_2024}, and we anticipate that advances from this literature can be integrated into our framework.

\section{Choice of eigenvalue for numerics} \label{sec:choosinglambdanumerical}

In the main text we look for approximations to the Koopman PDE \eqref{eq:koopmanPDE} for a real positive eigenvalue $\lambda$. What should the value of $\lambda$ be? It is known that products of KEFs are KEFs themselves with different eigenvalues. In particular, for a KEF $\psi$ with eigenvalue \(\lambda\), we see that:

\begin{align}
\nabla \left[ \psi(x)^\alpha \right] \cdot f(x) 
&= \alpha \psi(x)^{\alpha - 1} \nabla \psi(x) \cdot f(x)  \\
 &= \alpha \lambda \psi(x)^\alpha 
\end{align}

Therefore, \( \psi(x)^\alpha \) is also a Koopman eigenfunction, with eigenvalue \(\alpha \lambda\). This translates to changes in the shape of the KEF, i.e., the sharpness of the peaks, while maintaining the position of the zeroes.

In practice the choice of $\lambda$ affects training convergence, and it is therefore an important hyperparameter in the optimisation procedure (see Figure~\ref{fig:eigenvalue sweep}). We attribute this to the time scale of interest in the system $\boldsymbol {\dot x} = f(\boldsymbol x)$, and differences in the propagation of gradients for different $\lambda$.

\begin{figure}
    \centering
\includegraphics[width=0.45\linewidth]{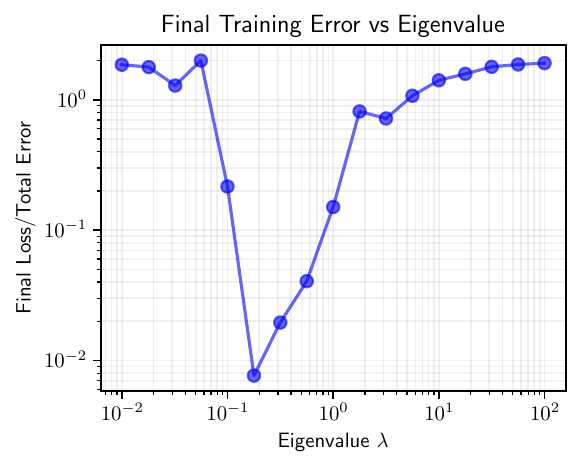}
    \includegraphics[width=0.45\linewidth]{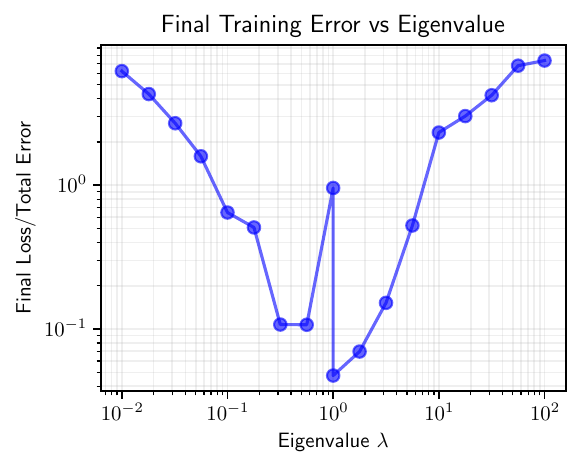}
    \caption{Training convergence as a function of eigenvalue $\lambda$, evaluated by normalised PDE error $\mathcal L_\text{ratio}$ \ref{eq:ratio loss} for two systems: LEFT, 1D bistable system $\dot x = x-x^3$ (see Figure~\ref{fig:1D_and_2Dexamples}) and 2BFF GRU 3D (see Figure~\ref{fig:2BFF}).}
    \label{fig:eigenvalue sweep}
\end{figure}

\section{Linking Koopman Eigenfunctions and separatrices: Formal Derivations}\label{sec:proof}

\newcommand{\X}{\mathcal X}

\paragraph{Setting.}
Let $(\X,d)$ be a smooth manifold with metric $d$ and flow $\{\Phi^t\}_{t\in\mathbb R}$ generated by the autonomous ODE $\dot{\boldsymbol{x}}=f(\boldsymbol{x})$. Thus $\Phi^0=\mathrm{id}$, $\Phi^{t+s}=\Phi^t\circ\Phi^s$, and $(t,\boldsymbol{x})\mapsto\Phi^t(\boldsymbol{x})$ is continuous.

\paragraph{Invariant set.}
An invariant set of a flow $\Phi^t$ is a subset $S\subset \mathcal X$ such that 
\[
\Phi^t(\boldsymbol{x}) \in S \text{ for all } \boldsymbol{x} \in S \text{ for all } t\in \mathbb R.
\]

\paragraph{Attracting set.}
A nonempty closed set $A \subset \X$ is an \emph{attracting set} if it is invariant and there exists an open neighbourhood $U$ of $A$ such that
\[
\lim_{t\to\infty} \mathrm{dist}(\Phi^t(\boldsymbol{x}),A) = 0
\quad \text{for all } \boldsymbol{x} \in U,
\]
where for $\boldsymbol{y} \in \X$ and $A \subset \X$,
\[
\mathrm{dist}(\boldsymbol{y},A) := \inf_{\boldsymbol{a} \in A} d(\boldsymbol{y},\boldsymbol{a}).
\]

\paragraph{Attractor.}
An attracting set $A$ is an \emph{attractor} if there is no proper subset of $A$ that is also an attracting set.

\paragraph{Basins of attraction.}
The basin of an attractor $A$ is
\[
B(A):=\{\boldsymbol{x}\in\X:\ \lim_{t\to\infty}\mathrm{dist}(\Phi^t(\boldsymbol{x}),A)=0\}.
\]
Each $B(A)$ is forward invariant and open.

\paragraph{Separatrix.}
Let $\{A_k\}_{k\in K}$ be the set of all the attractors of $\Phi^t$ in $\mathcal X$. Define the separatrix as the complement of all basins:
\[
\Sigma\ :=\ \X\setminus \bigcup_{k\in K} B(A_k).
\]

\paragraph{Koopman eigenfunction.}
Let $\psi:\X\setminus \bigl(\cup_k A_k\bigr)\to\mathbb R$ be continuous and satisfy
\[
\psi\!\bigl(\Phi^t(\boldsymbol{x})\bigr)\ =\ e^{\lambda t}\,\psi(\boldsymbol{x})\qquad\forall\,\boldsymbol{x}\in \X\setminus \textstyle\bigcup_k A_k,\ \forall\,t\ge 0,
\]
for some eigenvalue $\lambda>0$.
Then the sets $\{\psi>0\}$, $\{\psi<0\}$, and $\{\psi=0\}$ are forward invariant.

\paragraph{Constant sign near attractor (CS).}
Let $\psi:\X \setminus (\cup_k A_k)\to\mathbb R$ be a continuous function.
We say it has constant sign near an attractor $A_k$ if there exists an open neighbourhood 
$U_k$ with $A_k \subset U_k \subset B(A_k)$ such that 
$\psi$ has a constant sign on $U_k \setminus A_k$.

\begin{proposition}[Sign change at a zero $\Rightarrow$ separatrix point]
\label{prop:signchange-separatrix}
Let $\psi:\X \setminus \bigl(\cup_k A_k\bigr)\to\mathbb R$ be a continuous Koopman eigenfunction with positive eigenvalue $\lambda>0$:
\[
\psi(\Phi^t(\boldsymbol{x})) = e^{\lambda t}\psi(\boldsymbol{x}), \qquad t\ge 0,
\]
and have constant sign near all the attractors $A_k$.
Suppose $\boldsymbol{x} \in \X \setminus \bigl(\cup_k A_k\bigr)$ satisfies $\psi(\boldsymbol{x})=0$ and, for every $\varepsilon>0$, the ball $B_d(\boldsymbol{x},\varepsilon)$ contains points $\boldsymbol{y}^+,\boldsymbol{y}^-$ with $\psi(\boldsymbol{y}^+)>0$ and $\psi(\boldsymbol{y}^-)<0$. Then $\boldsymbol{x} \in \Sigma$.
\end{proposition}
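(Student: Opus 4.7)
The plan is to argue by contradiction: assume $\boldsymbol{x} \notin \Sigma$, so $\boldsymbol{x} \in B(A_k)$ for some attractor $A_k$, and derive a violation of the constant-sign hypothesis near $A_k$. Since basins are open, I can pick $\varepsilon > 0$ small enough that $B_d(\boldsymbol{x},\varepsilon) \subset B(A_k)$; the sign-change hypothesis then furnishes witnesses $\boldsymbol{y}^+, \boldsymbol{y}^-$ in this ball with $\psi(\boldsymbol{y}^+) > 0$ and $\psi(\boldsymbol{y}^-) < 0$. Because $\psi$ is defined only on $\X \setminus \cup_k A_k$, these witnesses automatically avoid the union of attractors, and in particular lie in $B(A_k) \setminus A_k$.

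\textbf{Forward propagation of sign.} The Koopman identity $\psi(\Phi^t(\boldsymbol{y}^\pm)) = e^{\lambda t}\,\psi(\boldsymbol{y}^\pm)$ together with $e^{\lambda t}>0$ shows that the sign of $\psi$ is preserved along each forward orbit, so $\psi(\Phi^t(\boldsymbol{y}^+)) > 0$ and $\psi(\Phi^t(\boldsymbol{y}^-)) < 0$ for every $t \ge 0$. Since $\boldsymbol{y}^\pm \in B(A_k)$, both orbits converge to $A_k$: $\mathrm{dist}(\Phi^t(\boldsymbol{y}^\pm), A_k) \to 0$. Invoking the constant-sign hypothesis, let $U_k$ be the associated open neighborhood on which $\psi$ has constant sign on $U_k \setminus A_k$; convergence yields some $T$ such that $\Phi^t(\boldsymbol{y}^\pm) \in U_k$ for all $t \ge T$.

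\textbf{Closing the argument and the main obstacle.} The delicate step is ensuring the orbits actually lie in $U_k \setminus A_k$, not merely in $U_k$, so that the CS property applies and the Koopman identity remains meaningful along the whole orbit. This uses the \emph{two-sided} invariance of attractors from the definition: if $\Phi^t(\boldsymbol{y}^\pm) \in A_j$ for some index $j$ and some $t > 0$, then $\boldsymbol{y}^\pm = \Phi^{-t}(\Phi^t(\boldsymbol{y}^\pm)) \in A_j$, which is inconsistent with $\psi(\boldsymbol{y}^\pm) \neq 0$. Hence $\Phi^t(\boldsymbol{y}^\pm) \in U_k \setminus A_k$ for all $t \ge T$, but one of these orbits carries strictly positive $\psi$-values and the other strictly negative, contradicting the constant sign of $\psi$ on $U_k \setminus A_k$. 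This rules out every $A_k$ and forces $\boldsymbol{x} \in \Sigma$. The remainder is routine: openness of basins, the asymptotic definition of $B(A_k)$, and sign-preservation by the positive exponential $e^{\lambda t}$.
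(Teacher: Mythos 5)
Your proof is correct and follows essentially the same route as the paper's: contradiction via openness of the basin, attraction into the constant-sign neighbourhood $U_k$, and sign preservation along orbits from the Koopman identity with $e^{\lambda t}>0$. The one addition — using two-sided invariance of attractors to confirm the orbit stays in $U_k\setminus A_k$ rather than landing on $A_k$ — is a legitimate detail the paper leaves implicit, but it does not change the argument.
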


\begin{proof}
Assume for contradiction that $\boldsymbol{x} \in B(A_{k^\ast})$.  
Since $B(A_{k^\ast})$ is open, there exists $\varepsilon_0>0$ such that 
$B(\boldsymbol{x},\varepsilon_0) \subset B(A_{k^\ast})$.  
  
By \textup{(CS)}, $\psi$ has a constant sign on $U_{k^\ast}\setminus A_{k^\ast}$ for an open neighborhood $U_{k^\ast}$. By attractivity, for any $\boldsymbol{y} \in B(\boldsymbol{x},\varepsilon_0)$ the trajectory 
$\Phi^t(\boldsymbol{y})$ eventually enters $U_{k^\ast}$ and remains there for all large $t$.

Since $\psi(\Phi^t(\boldsymbol{y})) = e^{\lambda t}\psi(\boldsymbol{y})$, the sign of $\psi(\boldsymbol{y})$ is preserved 
along the trajectory (forward invariance of $\{\psi>0\}$ and $\{\psi<0\}$), so $\psi(\boldsymbol{y})$ must already share that constant sign.  
Thus all points in $B(\boldsymbol{x},\varepsilon_0)$ have the same sign, contradicting the 
assumption of sign change in every neighbourhood.  
Therefore $\boldsymbol{x} \notin \bigcup_k B(A_k)$, i.e. $\boldsymbol{x} \in \Sigma$.
\end{proof}

\paragraph{Remark.}
The condition $\psi(\boldsymbol{x})=0$ with a sign change in every neighbourhood means that $\boldsymbol{x}$ is a boundary point of $\{\psi>0\}$ and $\{\psi<0\}$. Proposition~\ref{prop:signchange-separatrix} shows such points lie precisely on the separatrix.


\end{document}